\theoremstyle{plain}
\newtheorem{theorem}{Theorem}[section]
\newtheorem{proposition}[theorem]{Proposition}
\theoremstyle{definition}
\theoremstyle{remark}
\title{Bridging the Last Mile of Prediction: Enhancing Time Series Forecasting with Conditional Guided Flow Matching}
\author{%
  Huibo Xu\textsuperscript{1}\qquad 
  Runlong Yu\textsuperscript{2} \qquad 
  Likang Wu\textsuperscript{3} \qquad 
  Xianquan Wang\textsuperscript{1} \qquad 
  Qi Liu\textsuperscript{1}\thanks{Corresponding author: \texttt{qiliuql@ustc.edu.cn}}  \\
  \small{\textsuperscript{1}University of Science and Technology of China} \qquad
  \small{\textsuperscript{2}University of Pittsburgh} \qquad
  \small{\textsuperscript{3}Tianjin University} \\[3pt]
}
\begin{document}

\maketitle

\begin{abstract}
Existing generative models for time series forecasting often transform simple priors (typically Gaussian) into complex data distributions. However, their sampling initialization, independent of historical data, hinders the capture of temporal dependencies, limiting predictive accuracy. They also treat residuals merely as optimization targets, ignoring that residuals often exhibit meaningful patterns like systematic biases or nontrivial distributional structures. To address these, we propose Conditional Guided Flow Matching (CGFM), a novel model-agnostic framework that extends flow matching by integrating outputs from an auxiliary predictive model. This enables learning from the probabilistic structure of prediction residuals, leveraging the auxiliary model’s prediction distribution as a source to reduce learning difficulty and refine forecasts. CGFM incorporates historical data as both conditions and guidance, uses two-sided conditional paths (with source and target conditioned on the same history), and employs affine paths to expand the path space, avoiding path crossing without complex mechanisms, preserving temporal consistency, and strengthening distribution alignment. Experiments across datasets and baselines show CGFM consistently outperforms state-of-the-art models, advancing forecasting.

\end{abstract}


\section{Introduction}

Time series forecasting, a fundamental task in time series analysis, is widely used and has a considerable impact in various domains such as finance, healthcare, and energy~\cite{lim2021time}. Traditionally viewed as the process of predicting future values from historical observations, it can also be framed as a conditional generation problem, where future data are generated conditioned on history data~\cite{shen2023non}.  Naturally, Generative models have thus emerged as powerful tools for this task~\cite{li2022generative,yoon2019time}, with diffusion models achieving success for their predictive distribution modeling capabilities and conditional generation mechanisms, achieving notable success in time series forecasting~\cite{kollovieh2024predict}. 


Currently, most generative models for time series forecasting transform a simple prior, typically Gaussian, into a complex data distribution, often under rigid generation constraints. However, because sampling is initialized independently of historical data, these models struggle to capture temporal dependencies, which in turn limits predictive accuracy. In addition, forecasting models typically treat residuals purely as optimization targets, minimizing them through loss functions without modeling them as structured, learnable signals. Yet in practice, residuals often exhibit meaningful patterns, such as systematic biases or nontrivial distributional structures. Modeling these residuals, rather than discarding them after optimization, represents a promising direction for enhancing forecasting performance.


\begin{figure}[t]
    \centering
    \includegraphics[width=0.85\linewidth]{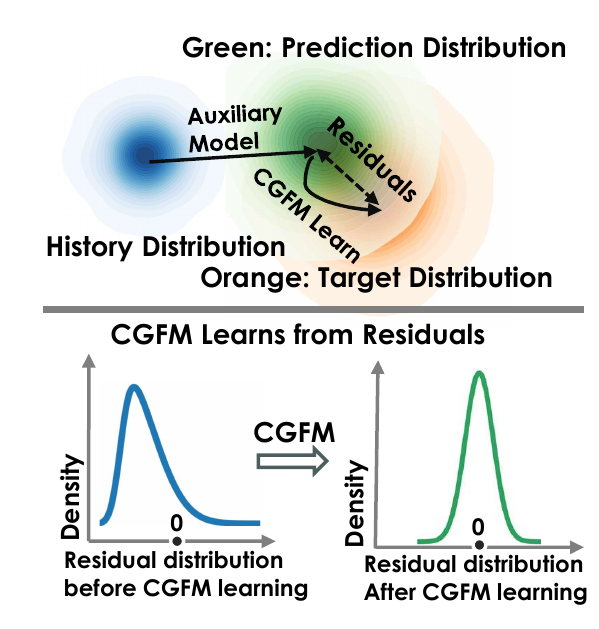}
    \caption{Top: Auxiliary models learn a prediction distribution that differs from the target, leading to residuals. CGFM learns the probabilistic structure of these residuals. Bottom: CGFM reduces the residuals to be centered and concentrated around zero (mean tending to zero with significantly reduced variance).}
    \vspace{-0.4cm}
    \label{fig:your-label}
\end{figure}

To address these challenges, we turn to flow matching~\cite{esser2024scalingrectifiedflowtransformers,kerrigan2023functionalflowmatching,lipman2023flowmatchinggenerativemodeling}, a state-of-the-art generative modeling framework that offers greater flexibility in both the choice of initial distributions and the design of sampling trajectories, while also producing higher-quality results compared to diffusion models. Leveraging these advantages, we propose \textbf{Conditional Guided Flow Matching (CGFM)}, a novel forecasting framework designed to enhance predictive accuracy by learning the probabilistic structure of residuals between model predictions and true future values, thereby refining predictions toward the target distribution.


First, we utilize the distribution of an auxiliary model’s predictions as the source distribution, rather than constraining it to a simple noise distribution. On one hand, the prediction distribution of the auxiliary model preserves richer temporal dependencies compared to a Gaussian prior and serves as the closest accessible approximation to the target distribution, naturally reducing the learning difficulty when starting from a distribution closer to the target. On the other hand, as we will demonstrate later, this flow matching formulation is equivalent to learning the probabilistic characteristics of the residuals, going beyond merely minimizing loss functions. By exploiting the valuable information encoded in the prediction residuals, our approach further refines the results and achieves improved forecasting accuracy.

Second, the essence of forecasting lies in capturing the relationship between historical data and future target values. We innovatively incorporate historical data into both probability path construction and velocity field learning. Key innovations include a two-sided conditional probability path, in which both the source and target distributions are conditioned on the same historical data, as a generalization of linear ones. This two-sided design, together with our choice of source distribution, model structure, and the flexibility of affine paths, theoretically guarantees the avoidance of path crossing, a critical challenge in flow-based modeling, without relying on complex restrictive mechanisms such as OT-CFM’s~\cite{tong2024improvinggeneralizingflowbasedgenerative}  optimal transport plans or Rectified Flow’s~\cite{liu2022flowstraightfastlearning} specialized training schemes. Affine paths expand the space of probability paths, enabling more flexible adaptation to varying initial distributions. By preventing path crossing, the framework reduces prediction ambiguity and information loss during sampling, which contributes to improved forecasting performance. Historical data guides the velocity field to effectively capture temporal dependencies, further strengthening the alignment between source and target distributions. Additionally, reparameterizing the prediction target to directly optimize toward the target data further enhances accuracy.

Our main contributions are as follows:
\setlength{\leftmargini}{20pt}
\begin{itemize}

\item We propose a novel model-agnostic time series forecasting framework, extending flow matching by integrating the outputs of an auxiliary model. This enables learning from prediction residuals, leveraging the auxiliary model's prediction distribution as a source to reduce learning difficulty and refine forecasts 

\item We innovatively integrate historical data as two-sided conditions into both probability path construction and velocity field learning: by designing two-sided conditional probability paths and combining general affine paths, we naturally avoid path crossing without relying on complex restrictive mechanisms. This design inherently fits the temporal dependencies of time series data, preserves temporal consistency, and strengthens distribution alignment, thereby boosting accuracy.

\item We conduct extensive experiments, demonstrating that CGFM consistently improves forecasting performance and outperforms state-of-the-art models, validating its effectiveness and generality.

\end{itemize}

\section{Preliminaries}

\subsection{Flow Matching}

Given a sample \( X_0 \) drawn from a source distribution \( p \) such that \( X_0 \sim p \), in \( d \)-dimensional Euclidean space where \( X_0 = (x_0^1, \ldots, x_0^d) \in \mathbb{R}^d \), and a target sample \( X_1 = (x_1^1, \ldots, x_1^d) \) \( X_1 \sim q \). Flow Matching (FM) constructs a probability path \((p_t)_{0 \leq t \leq 1}\) from the known distribution \(p_0 = p\) to the target distribution \(p_1 = q\), where \(p_t\) is a distribution over \(\mathbb{R}^d\). Specifically, Flow Matching employs a straightforward regression objective to train the velocity field neural network, which describes the instantaneous velocities of samples. 
The relationship between the velocity field and the flow is defined as:
\begin{equation}
\frac{d}{dt} \psi_t(x) = u_t(\psi_t(x))
\label{eq:conditional_flow},
\end{equation} where \( \psi_t(x) \) represents the flow at time \( t \), and \( \psi_0(x) = x \). The velocity field \( u_t \) generates the probability path \( p_t \) if its flow \( \psi_t \) satisfies \( X_t := \psi_t(X_0) \sim p_t\) for \(X_0 \sim p_0. \) The goal of Flow Matching is to learn a vector field \( u_{\theta}(t) \) such that its flow \( \psi_t \) generates a probability path \( p_t \) with \( p_0 = p \) and \( p_1 = q \). The Flow Matching loss is defined as: \(\mathcal{L}_{\text{FM}}(\theta) = \mathbb{E}_{t, X_t} \left[ \left\| u_{t}( X_t) - u_t^\theta(X_t) \right\|^2 \right], \)
\begin{figure*}
    \centering
    \includegraphics[width=0.8\linewidth]{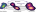}
\caption{Visualization of CGFM training. The ideal predictor maps the history distribution to the target distribution. However, in practice, the prediction distribution of the auxiliary model inevitably deviates from the target distribution. CGFM enhances prediction by learning the path between the prediction and target distributions. The black line between \(x_0\) and \(x_1\) represents the two-sided conditional guided probability path, while \(h\) generates \(x_0\) and \(x_1\) along the gray line, facilitating the learning of this path.}
\vspace{-1ex}
    \label{fig:enter-label}
\end{figure*}
\section{Related Work}
We present the most relevant related work here. Additional references are provided in the appendix.
\subsection{Flow Matching for Time Series}
Flow Matching (FM) has received increasing attention in time series modeling for its efficiency in constructing continuous probability paths via velocity fields. Early work such as CFM-TS~\cite{tamir2024conditional} explored FM for time series generation using Brownian bridges and Gaussian processes, offering improved stability over neural ODEs. TSFlow~\cite{kollovieh2024flowmatchinggaussianprocess} further addressed the large distributional gap between source and target by coupling GP-informed priors with optimal transport, though it only incorporated historical data during inference, limiting its temporal modeling capacity during training. FM-TS~\cite{hu2024fm} applied rectified flows in a basic form, demonstrating applicability but lacking structured conditioning. 
TFM~\cite{zhang2024trajectory} focused on clinical forecasting with memory-based trajectory coupling for irregular sampling and covariates but is limited by historical window constraints and general priors, underutilizing complex temporal dependencies and structured signals. CGFM's bilateral conditional design—with non-crossing paths, residual learning, and model independence—advances temporal consistency, efficiency, and flexibility for time series forecasting. Despite progress, existing methods suffer from \textit{weak historical data use} and over-reliance on generic priors. Our \textbf{CGFM} addresses this via an auxiliary model's predictive distribution (more target-closer than GP priors) to correct residuals, plus \textit{two-sided conditional paths} (historical-guided) for consistency and non-crossing; with affine interpolation, it captures residual dynamics and boosts accuracy.

\section{CGFM}

In time series prediction tasks, the goal is to leverage historical data to predict future data. Let \(H \in \mathbb{R}^{C \times L} \sim p_H\) denote the historical data, with samples represented by \(h\). Similarly, let \(F \in \mathbb{R}^{C \times F} \sim q\) represent the target future data, with samples denoted by \(f\). The probability path is defined as \(p_t\), where \(X_t \sim p_t\) represents the state of the data at time \(t\). The objective is to predict the future data \(F\), where in the flow matching framework, \(q\) serves as the target distribution. Specifically, at \(t=1\), \(X_1 \in \mathbb{R}^{C \times F} \sim q\), with samples denoted by \(x_1\). At \(t=0\), the source data is given by \(X_0 \in \mathbb{R}^{C \times F} \sim p\), with samples represented by \(x_0\).

\subsection{Choice of $X_0$}

To fully leverage the valuable information encoded in the predictions residuals, we harness the flexibility of flow matching by setting \(X_0\) as the output generated by a predictive auxiliary model, i.e., \(X_0 = X_{\text{aux}} = \Phi(H)\). In this case, the distribution \(p\) conditioned on \(h\) can be expressed as \(p(x_0 \mid h)\), representing the distribution of the auxiliary model's predictions. Specifically, we design a probability path that from the distribution of the auxiliary model's predictions to the target distribution. This allows flow matching to effectively learn from prediction residuals of the auxiliary model. 



\subsection{Analysis of the \(p(x_0\mid h)\)  Distribution}
Since time series datasets are typically stored with a limited number of significant digits and consist of a finite number of samples, even if the original time series is continuous, $H$ can only be considered to have an approximately continuous distribution in $\mathbb{R}^{C \times L}$. In general, most models are differentiable, and thus \(\Phi(H)\) can also be approximated as \( C(\mathbb{R}^{C \times F}) \). However, this is insufficient to ensure that \(p(x_0\mid h)\) is \( C^1 \) in \(\mathbb{R}^{C \times L}\).

\begin{proposition}[Noise Smoothing]
    Let \( X \in \mathbb{R}^{C \times F} \) be a time series with distribution \( P_{\text{ori}} \). Define the perturbed series \( R \) as: \( R = X + \sigma \epsilon,\) where \( \epsilon \sim \mathcal{N}(0, I) \) is additive Gaussian noise and \( \sigma > 0 \). The perturbed series \( R \) follows the distribution \( P_{\text{per}} \):
    \begin{equation}
        P_{\text{per}}(r) = \int P_X(r - \sigma \epsilon) P_\epsilon(\epsilon) \, d\epsilon,
    \end{equation}
which belongs to \( C^1(\mathbb{R}^d) \). Furthermore, \( P_{\text{per}} \) has a strictly positive density and possesses finite second moments.
\label{noise_smooth}
\end{proposition}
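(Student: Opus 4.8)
The plan is to recognize $P_{\text{per}}$ as the convolution of the law of $X$ with the Gaussian kernel
$\phi_\sigma(y) = (2\pi\sigma^2)^{-d/2}\exp\bigl(-\|y\|^2/(2\sigma^2)\bigr)$, the density of $\sigma\epsilon$ (here $d = C\times F$). After the substitution $y = \sigma\epsilon$ in the stated integral one obtains $P_{\text{per}}(r) = \int P_X(r-y)\,\phi_\sigma(y)\,dy = (P_X * \phi_\sigma)(r)$, and by symmetry of convolution this equals $\int \phi_\sigma(r-x)\,P_X(x)\,dx$. Writing $\mu$ for the law of $X$, the cleanest formulation is $P_{\text{per}}(r) = \int \phi_\sigma(r-x)\,d\mu(x)$, which is well-defined whenever $\mu$ is a probability measure, regardless of whether $X$ itself admits a density (this matters since the auxiliary model's empirical prediction distribution need not be continuous). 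The central observation, and the reason the proposition holds, is that all regularity of $P_{\text{per}}$ is inherited from the smooth kernel $\phi_\sigma$ rather than from $\mu$: convolution against a $C^\infty$ kernel mollifies an arbitrary measure.

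For the $C^1$ claim I would differentiate under the integral sign. Since $\partial_{r_i}\phi_\sigma(r-x) = -\sigma^{-2}(r_i - x_i)\,\phi_\sigma(r-x)$ and the map $y \mapsto y_i\,\phi_\sigma(y)$ is bounded on all of $\mathbb{R}^d$, say by $M_i < \infty$, we have $\sup_r \bigl|\partial_{r_i}\phi_\sigma(r-x)\bigr| \le \sigma^{-2}M_i$ uniformly in $x$. Because $\mu$ has total mass one, this uniform bound is $\mu$-integrable, so the Leibniz rule (justified through dominated convergence) applies and gives $\nabla P_{\text{per}}(r) = \int \nabla_r \phi_\sigma(r-x)\,d\mu(x)$. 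A second application of dominated convergence, using the same dominating constant together with the continuity of $r \mapsto \partial_{r_i}\phi_\sigma(r-x)$, shows each partial derivative is continuous in $r$; hence $P_{\text{per}} \in C^1(\mathbb{R}^d)$, and in fact $C^\infty$. This differentiation-under-the-integral step is the main technical obstacle, since it is the only place where one must verify integrability of a dominating function; everything else is routine.

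Strict positivity is then immediate, because $\phi_\sigma(r-x) > 0$ for every $r, x \in \mathbb{R}^d$, so integrating a strictly positive integrand against the unit-mass measure $\mu$ yields $P_{\text{per}}(r) > 0$ for all $r$. For the finite second moment I would work directly with $R = X + \sigma\epsilon$ and exploit the independence of $X$ and $\epsilon$ together with $\mathbb{E}[\epsilon] = 0$ and $\mathbb{E}[\|\epsilon\|^2] = d$, which makes the cross term vanish and gives $\mathbb{E}[\|R\|^2] = \mathbb{E}[\|X\|^2] + \sigma^2 d$. It remains only to note that $\mathbb{E}[\|X\|^2] < \infty$, which holds because the original series distribution is supported on finitely many stored samples and hence has bounded support, so the second moment of $R$ is finite.
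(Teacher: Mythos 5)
Your proof is correct, and while it follows the same overall outline as the paper's (convolution representation, differentiation under the integral via dominated convergence, strict positivity from the Gaussian kernel, and the same bias--variance-style second-moment computation $\mathbb{E}\|R\|^2 = \mathbb{E}\|X\|^2 + \sigma^2 d$), it differs at the one step that actually carries the weight, and your version is the sound one. The paper's proof writes $\nabla_r P_{\text{per}}(r) = \int \nabla_r P_X(r - \sigma\epsilon)\, P_\epsilon(\epsilon)\, d\epsilon$, placing the derivative on $P_X$ --- which presupposes that $P_X$ is differentiable, precisely the property the proposition is meant to \emph{manufacture} for a possibly non-smooth prediction distribution; as literally written the argument is circular. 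You instead symmetrize the convolution so the derivative lands on the Gaussian kernel $\phi_\sigma$, dominate $\sup_r |\partial_{r_i}\phi_\sigma(r-x)|$ by the constant $\sigma^{-2}M_i$ uniformly in $x$ (integrable against the unit-mass law $\mu$), and conclude $C^1$ (indeed $C^\infty$) regularity inherited from the kernel alone. This also buys genuine extra generality: your formulation $P_{\text{per}}(r) = \int \phi_\sigma(r-x)\, d\mu(x)$ makes sense when $X$'s law is an arbitrary probability measure --- e.g.\ the discrete empirical distribution of an auxiliary model's predictions, which the paper itself emphasizes is the relevant setting --- whereas the paper's proof tacitly assumes $P_X$ is a density (its positivity step, ``there exists $\epsilon$ such that $P_X(r-\sigma\epsilon)>0$,'' has the same implicit assumption, which your measure-theoretic positivity argument avoids). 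Your closing justification of $\mathbb{E}\|X\|^2 < \infty$ via the bounded support of the stored data is also a small improvement over the paper, which simply asserts finiteness.
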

\vspace{-0.2cm}

Intuitively, \( P_{\text{per}} \) can be regarded as the convolution of the original distribution \( P_X(x) \) with a Gaussian distribution \( P_\epsilon(\epsilon) \). Since the Gaussian distribution is a \( C^\infty \) function, \( P_R(r) \) is thus not only \( C^1 \) but also \( C^\infty \). When \(\Phi(H) + \sigma \epsilon\) is applied, the convolution with Gaussian noise significantly enhances the smoothness of the distribution, eliminating sharp variations and discontinuities present in the original distribution. This process ensures that the source remains a valid density while promoting diversity, aligning the noise-smoothed source distribution \( p \) with Proposition \ref{marginalization}, which will be discussed later.

\subsection{Two-Sided Conditional Guided Prediction}

For flow matching where the initial distribution is set to the predictions of the auxiliary model, the target distribution \(q\) used during training is derived from the inherent correspondence between future data \(F\) and historical data \(H\) in the dataset, which can be formulated as \(q(x_1 \mid h)\).

The next step is to learn the probability path from \(p(x_0 \mid h)\) to \(q(x_1 \mid h)\). For time series forecasting tasks, which fundamentally involve learning the mapping from historical sequences to target sequences, a natural and reasonable approach is to incorporate \(h\) as conditional guidance into the velocity field \(u_t\). Consequently, the probability path is also conditioned on \(h\).

Accordingly, our goal is to learn a probability path \( p_{t|H}(x|h) \) and a velocity field \( u_t(x|h) \), ensuring that \( u_t(x|h) \) generates \( p_{t|H}(x|h) \). To achieve this, since the transition path between distributions cannot be directly learned, we perform marginalization over the path.

In time series forecasting tasks, for a given \(h\), there exists a correspondence between the samples of the source distribution and the target distribution. Relying solely on one-sided conditioning, as in previous flow matching methods, is therefore inadequate. To address this, we propose two-sided conditionally guided probability paths, where a marginalization probability path is constructed and integrated to obtain \( p_{t|H}(x|h) \): \vspace{-0.5cm}

\begin{equation} \small
p_{t|H}(x|h) = \int p_{t|0,1,H}(x|x_0,x_1,h) \pi_{0,1|H}(x_0,x_1|h) dx_0 dx_1.
\label{condition_pro}
\end{equation}

Here, \(\pi_{0,1|H}(x_0,x_1 \mid h) = q(x_1 \mid h) p(x_0 \mid h)\), indicating that \(x_0\) and \(x_1\) are independent given \(h\), a concept referred to as conditional independent coupling. Both are related to the historical data \(h\). 

The two-sided conditionally guided probability path is required to comply with the boundary constraints \(p_{0|0,1}(x|x_0, x_1,h) = \delta_{x_0}(x)\) and \(p_{1|0,1}(x|x_0, x_1,h) = \delta_{x_1}(x)\). Here, \(\delta\) denotes the Dirac delta function. Subsequently, the velocity field can be obtained as:\vspace{-0.5cm}

\begin{equation} \small
u_t(x|h) = \int u_t(x|x_0,x_1,h) p_{0,1|t,H}(x_0,x_1|x, h) \, dx_0 dx_1, 
\end{equation}
By Bayes’ Rule, it follows that: \vspace{-0.5cm}

\begin{equation} \small
p_{0,1|t,H}(x_0,x_1|x, h) = \frac{p_{t|0,1,H}(x|x_0,x_1,h) \pi_{0,1|H}(x_0,x_1|h)}{p_{t|H}(x|h)}.
\end{equation}  
Thus, the model learns \( u_t(x | x_0, x_1, h) \) to obtain \( u_t(x | h) \). According to Eq.(\ref{eq:conditional_flow}), \( u_t(x | x_0, x_1, h) \) determines \( p_{0,1|t,H}(x_0,x_1|x,h) \), and vice versa. 

\begin{proposition}[Equivalence of CGFM and Learning Residual Distribution via Flow Matching]  
Under Proposition 0.1 (noise smoothing) and two-sided coupling \(\pi_{0,1|H}=p(x_0|h)q(x_1|h)\), CGFM equivalently learns residual \(\epsilon=x_1-x_0\)'s probabilistic characteristics via Flow Matching, with its path and loss tied to \(\epsilon\)'s evolution.  
\label{Equivalence of CGFM and Error Learning}  
\end{proposition}  

This equivalence highlights a critical advantage of CGFM: instead of treating residuals merely as errors to be minimized through loss functions, it explicitly models the probabilistic structure of these residuals via flow matching. By tying the learning process to the evolution of \(\epsilon = x_1 - x_0\), CGFM leverages the auxiliary model's predictions \(x_0\)—which already encode partial temporal information—as a structured starting point, enabling more efficient learning of how to correct these predictions toward the true future values \(x_1\). 



\subsection{Velocity Field of Marginal Probability Paths}
Previous studies have primarily focused on the application of conditional optimal transport flows. In the scenario of a two-sided condition, this can be formulated as:\(
X_t \sim p_{t|0,1,H} = tX_1 + (1-t)X_0 .
\)
Conditional optimal transport flows addresses the problem of kinetic energy minimization through the optimization formulation:  
\(
\arg\min_{p_t, u_t} \int_0^1 \int_Z \|u_t(x)\|^2 p_t(x) \, dx \, dt,
\)
providing a principled approach to solving such problems. However, in time-series forecasting, this may not identify the optimal predictive path, particularly when the initial distribution is highly complex. Notably, conditional optimal transport can be regarded as a special case within the broader family of affine conditional flows~\cite{albergo2023buildingnormalizingflowsstochastic}. 
\begin{equation}
 X_t \sim p_{t|0,1,H} = \alpha_t X_1 + \beta_t X_0, \label{affine_flow}
\end{equation}
where \(\alpha_t\) and \(\beta_t\) : 
\( [0, 1] \rightarrow [0, 1] \) are smooth functions, satisfying \(\alpha_0 = \beta_1 = 0\) and \(\alpha_1 = \beta_0 = 1\), with \(\dot\alpha_t > 0\), and \(-\dot\beta'_t > 0\) for \(t \in (0, 1)\). 
Referring to Eq.(\ref{eq:conditional_flow}), let \(x' = \psi_t (x)\), and the inverse function yields \(\psi_t^{-1}(x') = x\). Consequently, Eq.(\ref{eq:conditional_flow}) can be reformulated as 
\begin{equation}
u_t(x') = \dot\psi_t(\psi_t^{-1}(x')).
\end{equation}

\begin{proposition}[Velocity Field of Marginal Probability Paths]
Under mild assumptions, if \(\psi_t(\cdot|x_0,x_1,h)\) is smooth in \(t\) and forms a diffeomorphism in \(x_0,x_1\), then the velocity field \(u_t(x)\) can be represented as 
\vspace{-1ex}
\begin{equation}
u_t(x|h) = \mathbb{E}\left[ \dot{\psi}_t(X_0, X_1|H) | X_t = x,H=h \right]
\end{equation}for all \(t \in [0, 1)\).
\label{Velocity of the Conditional Path}
\end{proposition}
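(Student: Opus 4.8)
The plan is to reduce the claim to the statement that the marginal velocity field, defined through the posterior average on the right-hand side, generates the marginal path $p_{t|H}(\cdot|h)$ in the sense of the Continuity Equation, and then to identify the conditional velocity with $\dot\psi_t$. First I would invoke the reformulation $u_t(x') = \dot\psi_t(\psi_t^{-1}(x'))$ of the flow ODE (Eq.(\ref{eq:conditional_flow})): the assumption that $\psi_t(\cdot|x_0,x_1,h)$ is smooth in $t$ and a diffeomorphism guarantees that $\psi_t^{-1}$ exists for $t \in [0,1)$ (for the affine path of Eq.(\ref{affine_flow}) this is immediate since $\beta_t>0$ and $\dot\alpha_t>0$ on $(0,1)$), so that along the conditional trajectory $u_t(x\,|\,x_0,x_1,h) = \dot\psi_t(x_0,x_1|h)$ whenever $x=\psi_t(x_0,x_1|h)$; for the affine case this is just $\dot\alpha_t x_1 + \dot\beta_t x_0$, giving a closed form for the conditional velocity.

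Next I would establish the marginalization identity. Starting from $p_{t|H}(x|h) = \int p_{t|0,1,H}(x|x_0,x_1,h)\,\pi_{0,1|H}(x_0,x_1|h)\,dx_0\,dx_1$, I differentiate in $t$ under the integral and substitute the conditional Continuity Equation $\partial_t p_{t|0,1,H} = -\operatorname{div}_x\!\big(p_{t|0,1,H}\,u_t(\cdot|x_0,x_1,h)\big)$, which holds for each fixed $(x_0,x_1,h)$ because $\psi_t(\cdot|x_0,x_1,h)$ generates $p_{t|0,1,H}$ by construction. Pulling the divergence outside the integral and inserting the Bayes factorization $p_{t|0,1,H}(x|x_0,x_1,h)\,\pi_{0,1|H}(x_0,x_1|h) = p_{0,1|t,H}(x_0,x_1|x,h)\,p_{t|H}(x|h)$ converts the integrand to $u_t(x|x_0,x_1,h)\,p_{0,1|t,H}(x_0,x_1|x,h)\,p_{t|H}(x|h)$. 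Recognizing the inner integral as exactly the marginal velocity field times $p_{t|H}$ yields $\partial_t p_{t|H} + \operatorname{div}_x\!\big(p_{t|H}\,u_t(\cdot|h)\big)=0$, i.e. $u_t(\cdot|h)$ generates $p_{t|H}(\cdot|h)$.

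Combining the two steps, the marginal velocity field reads $u_t(x|h) = \int u_t(x|x_0,x_1,h)\,p_{0,1|t,H}(x_0,x_1|x,h)\,dx_0\,dx_1$, and after the substitution $u_t(x|x_0,x_1,h)=\dot\psi_t(x_0,x_1|h)$ valid on the event $\{X_t=x\}$ this is precisely the posterior average $\mathbb{E}\big[\dot\psi_t(X_0,X_1|H)\mid X_t=x,\,H=h\big]$ taken against $p_{0,1|t,H}$, giving the claimed representation for all $t\in[0,1)$.

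I expect the main obstacle to be the analytic justification of the interchange of differentiation and divergence with integration, together with the delicate status of the conditional path: in the two-sided construction with both endpoints fixed, $p_{t|0,1,H}(\cdot|x_0,x_1,h)$ concentrates as a Dirac mass at $\psi_t(x_0,x_1|h)$, so the conditional Continuity Equation must be read distributionally and the differentiation-under-the-integral step needs care. The stated mild assumptions—$C^1$ regularity and bounded support of the conditioning law, plus the strict positivity and finite moments supplied by the noise smoothing of Proposition \ref{noise_smooth}—are exactly what make the marginal $p_{t|H}$ a genuine $C^1$, strictly positive density and legitimize the dominated-convergence arguments. Alternatively, I would bypass the singular density entirely by testing against smooth compactly supported $f$, writing $\frac{d}{dt}\mathbb{E}[f(X_t)] = \mathbb{E}\big[\nabla f(X_t)\cdot\dot\psi_t(X_0,X_1|H)\big]$ and applying the tower property over $X_t$ to recover the weak form of the Continuity Equation without ever differentiating a delta.
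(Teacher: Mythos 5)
Your proof is correct, and its core---identifying the conditional velocity $u_t(x|x_0,x_1,h)=\dot{\psi}_t(x_0,x_1|h)$ along the trajectory, inserting the Bayes factorization $p_{0,1|t,H}(x_0,x_1|x,h)=p_{t|0,1,H}(x|x_0,x_1,h)\,\pi_{0,1|H}(x_0,x_1|h)/p_{t|H}(x|h)$ into the marginal velocity formula, and recognizing the resulting integral as the posterior expectation---is exactly the paper's proof of this proposition. The additional continuity-equation argument showing that $u_t(\cdot|h)$ generates $p_{t|H}(\cdot|h)$ is not needed for the stated representation (the paper keeps it as a separate result, Lemma~\ref{lemmaa.3}, invoked for the marginalization proposition), though your caveat that the two-sided conditional path is a Dirac mass and your weak-formulation workaround via test functions are sound refinements the paper leaves implicit.
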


According to Proposition \ref{Velocity of the Conditional Path}, under the two-sided condition, the velocity field takes the form of:
\vspace{-1ex}
\begin{equation}
u_t(x|h) = \mathbb{E} \left[ \dot{\alpha}_t X_1 + \dot{\beta}_t X_0 \mid X_t = x, H = h \right].
\label{vel}
\end{equation}
The choice of \(\alpha_t\) and \(\beta_t\) enhances flexibility, making it better suited for complex source distributions and more effective for predictive path construction. We further investigate the effects of different parameterizations of \(\alpha_t\) and \(\beta_t\) in Experiment~\ref{path}.

\subsection{Marginalization}

After deriving the conditional velocity field \( u_t(x|h) \) and the conditional probability path \( p_{t|H}(x|h) \) from the marginal velocity field, we must also ensure that \( u_t(x|h) \) indeed generates \( p_{t|H}(x|h) \).

\begin{proposition}[Marginalization via Conditional Affine
Flows]
Under mild assumptions, $q$ has a bounded support and $p$ is $C^1(\mathbb{R}^d)$ with a strictly positive density and finite second moments. These two are related by the conditional independent coupling $\pi_{0,1|H}(x_0, x_1|h) = p(x_0|h)q(x_1|h)$. $p_t(x|h)$ is defined as Eq.(\ref{condition_pro}), with $\psi_t$ defined by Eq.(\ref{affine_flow}). Subsequently, the marginal velocity engenders $p_t$ that interpolates between $p$ and $q$.
\label{marginalization}
\end{proposition}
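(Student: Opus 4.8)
The plan is to establish the two defining properties of an interpolating path: the boundary identities $p_0(\cdot\mid h)=p$ and $p_1(\cdot\mid h)=q$, and the continuity equation $\partial_t p_t(x\mid h)+\operatorname{div}\bigl(p_t(\cdot\mid h)\,u_t(\cdot\mid h)\bigr)(x)=0$ for the marginal pair $(p_t,u_t)$ given by Eq.(\ref{condition_pro}) and Eq.(\ref{vel}). The whole argument rests on lifting these to the conditional level, where everything is explicit, and then marginalizing against the independent coupling $\pi_{0,1\mid H}(x_0,x_1\mid h)=p(x_0\mid h)q(x_1\mid h)$ via Bayes' rule, exactly in the spirit of Proposition~\ref{Velocity of the Conditional Path}.

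First I would treat the conditional problem. For fixed $(x_0,x_1,h)$ the affine flow $\psi_t=\alpha_t x_1+\beta_t x_0$ of Eq.(\ref{affine_flow}) is a deterministic map, so the conditional path $p_{t\mid 0,1,H}(\cdot\mid x_0,x_1,h)$ is the pushforward of $\delta_{x_0}$ under $\psi_t$, and the conditional velocity is $\dot\psi_t=\dot\alpha_t x_1+\dot\beta_t x_0$. The conditional continuity equation then holds automatically, since the pushforward of a measure under a smooth flow always solves the continuity equation driven by the generating velocity. The endpoint values $\alpha_0=\beta_1=0$ and $\alpha_1=\beta_0=1$ give $\psi_0=x_0$, $\psi_1=x_1$, hence $p_{0\mid 0,1,H}=\delta_{x_0}$ and $p_{1\mid 0,1,H}=\delta_{x_1}$, which are exactly the prescribed Dirac boundary constraints.

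Next I would marginalize. Integrating the conditional path against $\pi_{0,1\mid H}$ yields $p_t(\cdot\mid h)$, and at the endpoints this returns $p$ and $q$ by the boundary constraints above, settling the interpolation claim. For the continuity equation I would differentiate Eq.(\ref{condition_pro}) in $t$, exchange $\partial_t$ with the integral, substitute the conditional continuity equation, and pull the divergence outside:
\begin{equation}
\partial_t p_t(x\mid h)=-\int \operatorname{div}_x\!\bigl(u_t(x\mid x_0,x_1,h)\,p_{t\mid 0,1,H}\bigr)\,\pi_{0,1\mid H}\,dx_0\,dx_1=-\operatorname{div}_x\!\Bigl(\textstyle\int u_t\,p_{t\mid 0,1,H}\,\pi_{0,1\mid H}\,dx_0\,dx_1\Bigr).
\end{equation}
The inner integral is precisely $p_t(x\mid h)\,u_t(x\mid h)$: by Bayes' rule the posterior weight is $p_{0,1\mid t,H}=p_{t\mid 0,1,H}\,\pi_{0,1\mid H}/p_t$, so the conditional-expectation formula of Eq.(\ref{vel}) recovers the flux. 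To see that $p_t(\cdot\mid h)$ is itself a genuine $C^1$ strictly positive density on $t\in(0,1)$ — implicitly needed for $u_t$ to be well defined — I would use that $X_0$ and $X_1$ are conditionally independent given $h$, so $X_t\mid h$ is the sum of the independent variables $\beta_t X_0$ and $\alpha_t X_1$; since $\beta_t>0$ for $t<1$ and $p$ is $C^1$ with positive density by Proposition~\ref{noise_smooth}, the rescaled source $\beta_t X_0$ has a smooth strictly positive density, and convolving with the law of $\alpha_t X_1$ preserves these properties.

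The main obstacle will be the analytic justification of interchanging $\partial_t$ and $\operatorname{div}_x$ with the integral, together with finiteness of the marginal flux. I would control these on each compact subinterval $[0,1-\eta]$ by dominated convergence: the bounded support of $q$ confines $x_1$, the finite second moments of $p$ control $x_0$ and hence the integrand $\dot\alpha_t x_1+\dot\beta_t x_0$ weighted by the conditional density, and smoothness of $\psi_t$ in $t$ supplies the dominating functions. The genuinely delicate point is the degeneration as $t\to1$: since $\beta_t\to0$ the conditional paths concentrate and the velocity can blow up, which is why Proposition~\ref{Velocity of the Conditional Path} and the present claim are stated on $[0,1)$ rather than the closed interval. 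Accordingly I would obtain $p_1(\cdot\mid h)=q$ not from the continuity equation at $t=1$ but as the weak limit $p_t(\cdot\mid h)\to q$ as $t\to1^-$, which follows from the endpoint behavior of the conditional Diracs tested against smooth compactly supported functions.
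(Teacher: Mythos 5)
Your proposal is correct and takes essentially the same route as the paper: the paper delegates your marginalization computation (differentiate Eq.(\ref{condition_pro}) under the integral, insert the conditional continuity equation, pull out the divergence, and identify the flux via Bayes' rule) to its Lemma~\ref{lemmaa.3}, then, exactly as you do, checks conditional integrability from the bounded support of \(q\) and the finite second moments of \(p\) (bounding \(\int_0^1(|\dot\alpha_t|C_q+|\dot\beta_t|C_p)\,dt\) by the \(L^1\) norms of \(\dot\alpha,\dot\beta\), which also covers the \(t\to1\) degeneration you isolate on compact subintervals) and reads the boundary conditions directly off \(\alpha_0=\beta_1=0\), \(\alpha_1=\beta_0=1\). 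Your two supplementary verifications --- strict positivity and \(C^1\) regularity of \(p_t(\cdot\mid h)\) via the convolution of \(\beta_t X_0\) with the law of \(\alpha_t X_1\), and obtaining \(p_1=q\) as a weak limit rather than by direct substitution --- are details the paper leaves implicit (its lemma simply assumes \(p_t(x)>0\) and it plugs \(t=1\) into the affine map), so they refine rather than diverge from the published argument.
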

According to Proposition \ref{noise_smooth}, whether \( p \) represents the noise distribution or the output distribution of the auxiliary model, appropriate operations can ensure that the \( C^1 \) condition is satisfied. Consequently, Proposition \ref{marginalization} guarantees the correctness of the flow matching construction.

\begin{proposition}[Transportation and Non-crossing of Probability Paths]
Under the assumptions of Proposition~\ref{marginalization}, further suppose that the affine conditional path and the velocity field are given by Eq.~(\ref{affine_flow}) and Eq.~(\ref{vel}), respectively. Then All paths of the flow are non-crossing: there exist no \(t \in [0,1)\), \(z \in \mathbb{R}^d\), and distinct initial conditions \(X_0^{(1)} \neq X_0^{(2)}\) such that \(X_t^{(1)} = X_t^{(2)} = z\) with distinct evolution directions.
\label{Transportation and Non-crossing of Probability Flows}
\end{proposition}
\begin{figure}[!ht]
    \centering
    \includegraphics[width=0.8\linewidth]{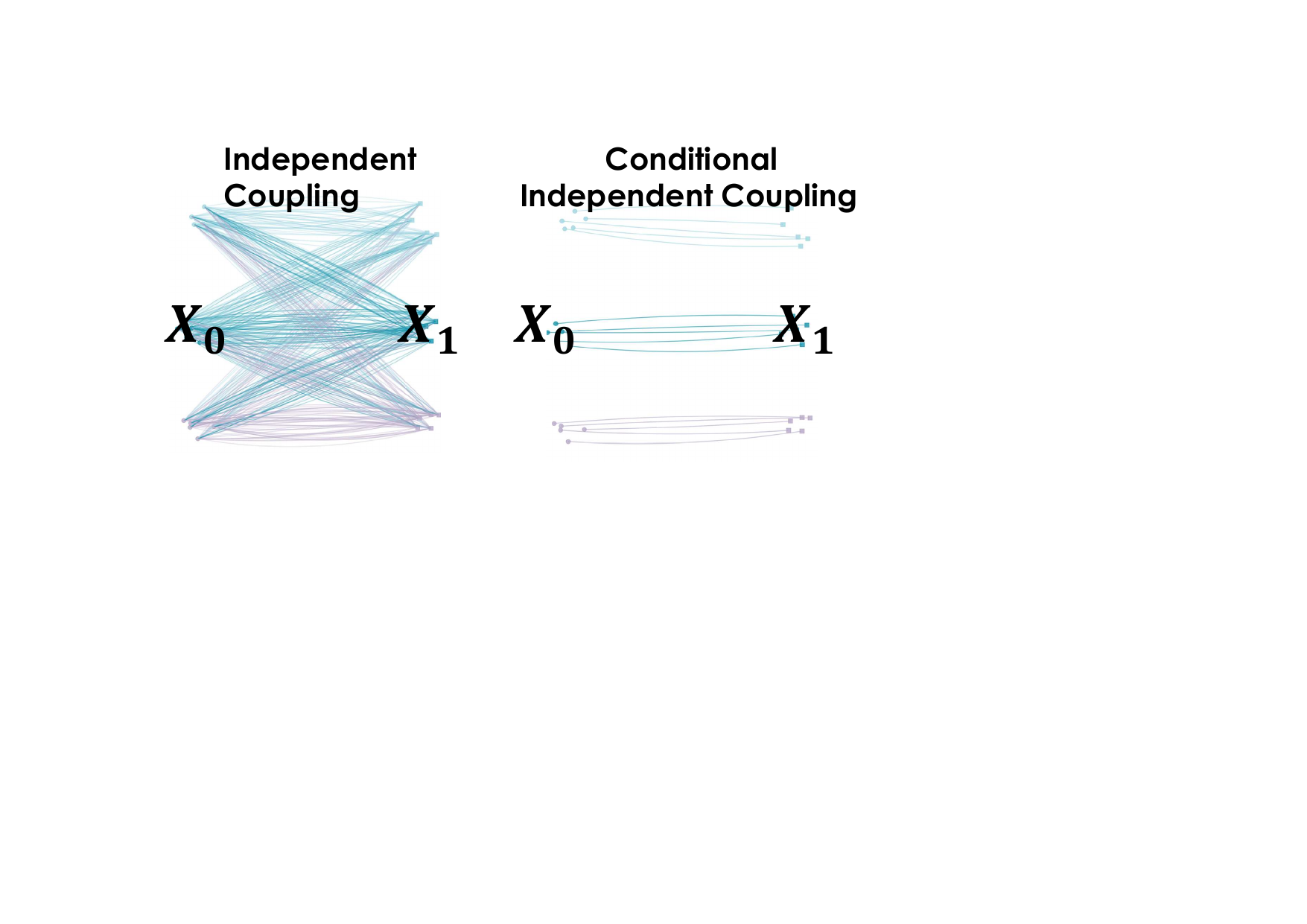}
    \caption{Illustration of Proposition~\ref{Transportation and Non-crossing of Probability Flows}. Left: Independent coupling may produce crossing paths. Right: CGFM’s conditional independent coupling ensures non-crossing paths by conditioning both source and target on shared history}
    \label{fig3}
\end{figure}
As visualized in Figure~\ref{fig3}, this guarantee of non-crossing paths is critical for time series forecasting, as path crossing would lead to ambiguous mappings between the auxiliary model's predictions \(x_0\) and the true future values \(x_1\) at intermediate time steps \(t\). Such ambiguity could cause information loss or conflicting signals during the flow's evolution, undermining the model's ability to refine predictions consistently.


\subsection{Parameterization of the Prediction Target}

Numerous studies in the domains of time-series forecasting and protein synthesis\cite{watson2023novo,shen2024multi,shen2023non}, have undertaken the reparameterization of prediction targets.

In the case of our two-sided condition's affine path By using Eq.(\ref{affine_flow}), we obtain:

\begin{equation}
X_1 = \frac{X_t - \beta_tX_0}{\alpha_t}, X_0 = \frac{X_t - \alpha_tX_1}{\beta_t}.
\label{reparameterization}
\end{equation} 
\vspace{-1ex}

Substituting Eq.(\ref{reparameterization}) into Eq.(\ref{vel}) allows for the following reparameterization:

\begin{equation}
u_t(x|h) = \dot{\alpha}_tE[X_1|X_t = x,h] + \dot{\beta}_tE[X_0|X_t = x,h]    
\label{v1}
\end{equation}
\begin{equation}
= \frac{\dot{\beta}_t}{\beta_t}x + \Big[\dot{\alpha}_t - \frac{\alpha_t \dot{\beta}_t}{\beta_t}\Big]E[X_1|X_t = x,H=h] 
\label{x1pre}
\end{equation}
\begin{equation}
= \frac{\dot{\alpha}_t}{\alpha_t}x + \Big[\dot{\beta}_t - \frac{\beta_t \dot{\alpha}_t}{\alpha_t}\Big]E[X_0|X_t = x,H=h].
\label{x0pre}
\end{equation}
\vspace{-0.5ex}
Whereas Eq.(\ref{x1pre}) provides a parameterization of \( u_t \) for predicting the target \( x_1 \), where \( x_{1|t}(x) = \mathbb{E}[X_1 | X_t = x] \) is defined as the \( x_1 \)-prediction. Eq.(\ref{x0pre}) offers a parameterization of \( u_t \) for the source \( x_0 \), where \( x_{0|t}(x) = \mathbb{E}[X_0 | X_t = x] \) is defined as the \( x_0 \)-prediction. These equations introduce two novel methods of parameterization. In light of pertinent literature \cite{watson2023novo} and experimental observations, we discern that for sequence prediction tasks, considering our generation target $x_1$ as our training objective engenders enhanced outcomes.

\subsection{Loss Function}
After obtaining the conditional guided probability path \( p_{t|H}(x | h) \) and the velocity field \( u_t(x | h) \), we proceed to define the loss function, specifically the guided flow matching loss:
\vspace{-1pt}
\begin{equation}
\mathcal{L}_{GM}(\theta) = \mathbb{E}_{t,H,X_t} \left[ \left\| u_t(X_t \mid H) - u^\theta_t(X_t \mid H) \right\|^2 \right],
\end{equation}
\vspace{-1pt}
where the velocity field \( u_t(X_t \mid H) \) is defined by:\(u_t(X_t \mid H) = \mathbb{E}\left[ g_t(X_0, X_1) \;\bigg|\; X_t = x, H = h \right].\)

The prediction function \( g_t(X_0, X_1) \) is specified based on the prediction target as follows:

\begin{equation}
g_t(X_0, X_1) =
\begin{cases}
    \dot{\alpha}_t X_1 + \dot{\beta}_t X_0, & \text{\( u_t \)-Prediction}, \\
    X_0, & \text{\( X_0 \)-Prediction}, \\
    X_1, & \text{\( X_1 \)-Prediction}.
\end{cases}
\end{equation}

From Eq.(\ref{v1}), Eq.(\ref{x1pre}), and Eq.(\ref{x0pre}), it can be shown that the above three prediction methods are mathematically equivalent. If the prediction objective is \( X_1 \)-prediction, then after training the velocity field \( u_t^\theta \) to predict \( X_1 \), it can be substituted into Eq. \ref{x1pre} to replace \( \mathbb{E}[X_1 \mid X_t = x, H = h] \), resulting in the velocity field at time \( t \). The same principle applies to \( x_0 \). Furthermore, the conditional guided flow matching loss function \(\mathcal{L}_{CGM}(\theta)\) is defined as: 

\begin{equation}
\small
\mathcal{L}_{CGM}(\theta) = \mathbb{E}_{t,H,(X_0,X_1) \sim \pi_{0,1|H}} \left[ \left\| g_t(X_0, X_1) - u^\theta_t(X_t) \right\|^2 \right].
\end{equation} 

Since \( g_t(X_0, X_1) \) is explicitly specified and computable, the loss \(\mathcal{L}_{CGM}(\theta)\) offers significant advantages for optimization.

\begin{proposition}
\label{prop:gradient_equivalence}
The gradients of the guided flow matching loss and the conditional guided flow matching loss coincide:
\begin{equation}
\nabla_\theta \mathcal{L}_{GM}(\theta) = \nabla_\theta \mathcal{L}_{CGM}(\theta). 
\end{equation}
Moreover, the minimizer of the Conditional Guided Flow Matching loss \(\mathcal{L}_{CGM}(\theta)\) is the marginal velocity \( u_t(X_t\mid H) \).
\end{proposition}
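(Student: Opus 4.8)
The plan is to show that $\mathcal{L}_{GM}(\theta)$ and $\mathcal{L}_{CGM}(\theta)$ differ only by a quantity that does not depend on $\theta$, so their gradients must coincide, and then to deduce the minimizer claim from the characterization of the conditional expectation as the $L^2$-projection. First I would expand both squared norms via $\|a-b\|^2 = \|a\|^2 - 2\langle a,b\rangle + \|b\|^2$. Abbreviating $u_t := u_t(X_t\mid H) = \mathbb{E}[g_t(X_0,X_1)\mid X_t, H]$ and $u_t^\theta := u_t^\theta(X_t\mid H)$, both losses carry the identical $\theta$-dependent term $\mathbb{E}[\|u_t^\theta\|^2]$, while the terms $\mathbb{E}[\|u_t\|^2]$ and $\mathbb{E}[\|g_t\|^2]$ are each constant in $\theta$. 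Hence the only terms that could distinguish the two gradients are the cross terms $\mathbb{E}[\langle u_t, u_t^\theta\rangle]$ and $\mathbb{E}[\langle g_t, u_t^\theta\rangle]$.

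The key step is to establish equality of these cross terms. Since $u_t^\theta(X_t\mid H)$ is measurable with respect to $(X_t,H)$, and since $X_t$ is determined by $(X_0,X_1,t)$ through the affine path of Eq.(\ref{affine_flow}), I would condition on $(X_t,H)$ and apply the tower property together with the defining relation $u_t = \mathbb{E}[g_t\mid X_t,H]$:
\begin{equation}
\mathbb{E}\left[\langle g_t(X_0,X_1),\, u_t^\theta\rangle\right] = \mathbb{E}\left[\left\langle \mathbb{E}[g_t(X_0,X_1)\mid X_t, H],\; u_t^\theta\right\rangle\right] = \mathbb{E}\left[\langle u_t,\, u_t^\theta\rangle\right].
\end{equation}
The integrability needed to license pulling $u_t^\theta$ inside the inner expectation is supplied by the finite-second-moment guarantee of Proposition~\ref{noise_smooth} and the bounded support of $q$ assumed in Proposition~\ref{marginalization}, which together ensure $g_t$ is square-integrable. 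Consequently $\mathcal{L}_{GM}(\theta) - \mathcal{L}_{CGM}(\theta)$ is independent of $\theta$, whence $\nabla_\theta \mathcal{L}_{GM} = \nabla_\theta \mathcal{L}_{CGM}$.

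For the minimizer claim, I would argue pointwise in the conditioning variables: for fixed $(X_t,H)$ the map $v \mapsto \mathbb{E}[\|g_t - v\|^2 \mid X_t, H]$ is a strictly convex quadratic minimized uniquely at $v = \mathbb{E}[g_t\mid X_t,H] = u_t(X_t\mid H)$, i.e. the orthogonal projection of $g_t$ onto the $(X_t,H)$-measurable functions. Minimizing over each conditioning value yields the global minimizer of $\mathcal{L}_{CGM}$, which is exactly the marginal velocity $u_t(X_t\mid H)$. The main obstacle is not conceptual but technical: I expect the delicate point to be verifying that every interchange of expectation and inner product is justified and that the relevant conditional expectations are well-defined and finite, which is precisely where the smoothing of Proposition~\ref{noise_smooth} and the moment/support hypotheses of Proposition~\ref{marginalization} do the real work.
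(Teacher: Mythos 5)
Your proof is correct, but it takes a genuinely different route from the paper's. The paper proves the result by differentiating first: its auxiliary Proposition on the $L^2$ distance computes $\nabla_\theta \mathbb{E}_{X,Y}\|Y - g^\theta(X)\|^2 = \mathbb{E}_X\bigl[2\bigl(g^\theta(X)-\mathbb{E}[Y|X]\bigr)^\top \nabla_\theta g^\theta(X)\bigr]$ via the chain rule and an interchange of $\nabla_\theta$ with $\mathbb{E}$, then exploits the \emph{linearity in $Y$} of the gradient integrand to replace $Y$ with $\mathbb{E}[Y|X]$; the main proposition then instantiates this with $X = (X_t, H)$ at fixed $(t,h)$, $Y = g_t(X_0,X_1)$, and integrates over $t \sim \mathcal{U}[0,1]$ and $H \sim p_H$. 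You instead expand the squares first and show the stronger statement that $\mathcal{L}_{GM}(\theta) - \mathcal{L}_{CGM}(\theta)$ is a $\theta$-independent constant (indeed $\mathbb{E}\|u_t\|^2 - \mathbb{E}\|g_t\|^2 = -\mathbb{E}\|g_t - u_t\|^2$, the conditional-variance term), with the cancellation of cross terms coming from the tower property and the $(X_t,H)$-measurability of $u_t^\theta$. Your route buys something real: equality of losses up to an additive constant implies gradient equality wherever $u_t^\theta$ is differentiable, without ever needing to justify swapping $\nabla_\theta$ with the expectation — the interchange the paper simply assumes; the price is that you need square-integrability of $g_t$ and integrability of the cross terms to license the tower-property step, which you correctly source from the bounded support of $q$ and the finite second moments guaranteed by the noise-smoothing proposition (you do implicitly assume integrability of $u_t^\theta(X_t\mid H)$ against $g_t$, e.g.\ via linear growth of the network, but the paper's proof carries an analogous unstated hypothesis). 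Your treatment of the minimizer — pointwise strict convexity of $v \mapsto \mathbb{E}[\|g_t - v\|^2 \mid X_t, H]$, i.e.\ the $L^2$-projection characterization of conditional expectation — matches the paper's conclusion in its auxiliary proposition, reached there as a byproduct of the rewritten loss $\mathbb{E}_X\|\mathbb{E}[Y|X] - g^\theta(X)\|^2$. One point worth making explicit in your write-up: both losses sample $X_t$ by different mechanisms ($X_t \sim p_t(\cdot|h)$ marginally versus $X_t = \alpha_t X_1 + \beta_t X_0$ with $(X_0,X_1) \sim \pi_{0,1|H}$), so the identification of the common term $\mathbb{E}\|u_t^\theta(X_t\mid H)\|^2$ across the two losses rests on these inducing the same marginal law of $(t, H, X_t)$ — true by the definition of the marginal path in Eq.~(\ref{condition_pro}), but it deserves a sentence.
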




\vspace{-1cm}
\section{Experiment}
A detailed ablation study, along with detailed explanations of Figure~\ref{fig:pca-visualization} and Figure~\ref{learning residual}, as well as experimental details such as hyperparameters and additional experiments, can be found in the Appendix.
\subsection{Experiment Settings}
\subsubsection{\textbf{Baseline and Datasets}}
To demonstrate the superiority of our proposed model, CGFM, we compared it against several representative baselines. Among transformer-based models, we included multipatchformer~\cite{naghashi2025multiscale}, iTransformer~\cite{liu2024itransformer}, PatchTST~\cite{nie2023timeseriesworth64}, pathformer~\cite{chen2024pathformer}, Autoformer~\cite{chen2021autoformer}, and the classic FedFormer~\cite{zhou2022fedformer}. Additionally, we incorporated MLP-based models, including RLinear~\cite{li2023revisitinglongtermtimeseries}, TimesNet~\cite{wu2022timesnet}, Timemixer~\cite{wang2024timemixer} and TiDE~\cite{das2024longtermforecastingtidetimeseries}. Furthermore, diffusion-based models such as CSDI~\cite{tashiro2021csdi} and TimeDiff~\cite{shen2023non} were also evaluated. Descriptions of the datasets and their statistical properties are detailed in the Appendix.
\subsubsection{\textbf{Evaluation Metrics}} The experiments employed Mean Squared Error (MSE) and Mean Absolute Error (MAE) to evaluate the predictive performance of the models. To ensure the robustness of the results, each experiment was repeated 10 times, and the outcomes were averaged.
\begin{table*}[!ht]

  \scriptsize
  \small
  \renewcommand\arraystretch{0.37}
  \tabcolsep=0.08cm
  \centering
  \caption{Forecasting errors under the multivariate setting. The \textbf{bold} values indicate better performance.}
    \resizebox{\textwidth}{!}{
        \begin{tabular}{c|c|cccc|cccc|cccc|cccc}
    \toprule
    \multicolumn{2}{c|}{Methods} & \multicolumn{2}{c}{Rlinear} & \multicolumn{2}{c}{+ CGFM} & \multicolumn{2}{c}{iTransformer} & \multicolumn{2}{c}{+ CGFM} & \multicolumn{2}{c}{TimeDiff} & \multicolumn{2}{c}{+ CGFM} & \multicolumn{2}{c}{MultiPatch.} & \multicolumn{2}{c}{+ CGFM} \\
    \multicolumn{2}{c|}{Metric} & MSE   & MAE   & MSE   & \multicolumn{1}{c}{MAE} & MSE   & MAE   & MSE   & \multicolumn{1}{c}{MAE} & MSE   & MAE   & MSE   & \multicolumn{1}{c}{MAE} & MSE   & MAE   & MSE   & MAE \\
    \midrule

    \multirow{4}[2]{*}{\rotatebox{90}{ETTm1}}
           & 96    & 0.359  & 0.378  & \textbf{0.307} & \textbf{0.351} & 0.336  & 0.369  & \textbf{0.313} & \textbf{0.362} & 0.339  & 0.362  & \textbf{0.309} & \textbf{0.361} & 0.317  & 0.345  & \textbf{0.308} & \textbf{0.355} \\
          & 192   & 0.396  & 0.395  & \textbf{0.341} & \textbf{0.382} & 0.387  & 0.392  & \textbf{0.366} & \textbf{0.382} & 0.372  & 0.381  & \textbf{0.346} & \textbf{0.389} & 0.367  & 0.369  & \textbf{0.339} & \textbf{0.378} \\
          & 336   & 0.428  & 0.416  & \textbf{0.372} & \textbf{0.397} & 0.427  & 0.422  & \textbf{0.398} & \textbf{0.412} & 0.403  & 0.401  & \textbf{0.384} & \textbf{0.409} & 0.399 & 0.398 & \textbf{0.373} & \textbf{0.401} \\
          & 720   & 0.489  & 0.451  & \textbf{0.443} & \textbf{0.421} & 0.493  & 0.461  & \textbf{0.461}  & \textbf{0.452} & 0.455  & 0.432  & \textbf{0.441} & \textbf{0.416} & 0.467 & 0.436  & \textbf{0.438} & \textbf{0.430} \\
            \midrule

        \multirow{4}[2]{*}{\rotatebox{90}{ETTm2}} 
          & 96    & 0.182 & 0.267  & \textbf{0.167} & \textbf{0.253} &\textbf{0.179}  & \textbf{0.262}  & 0.181  & 0.265  & 0.185  & 0.265  & \textbf{0.170} & \textbf{0.261} & 0.171  & 0.252& \textbf{0.165} & \textbf{0.255} \\
          & 192   & 0.246  & 0.305  & \textbf{0.228} & \textbf{0.298} & 0.244  & 0.306  & \textbf{0.242} & \textbf{0.299} & 0.251  & 0.310  & \textbf{0.234} & \textbf{0.286} & 0.238  & 0.296  & \textbf{0.229} & \textbf{0.296} \\
          & 336   & 0.310  & 0.344  & \textbf{0.281} & \textbf{0.323} & 0.314  & 0.351  & \textbf{0.291} & \textbf{0.329} & 0.311  & 0.352  & \textbf{0.283} & \textbf{0.315} & 0.305  & 0.342  & \textbf{0.283} & \textbf{0.320} \\
          & 720   & 0.407  & 0.399  & \textbf{0.365} & \textbf{0.367} & 0.413  & 0.407  & \textbf{0.380}  & \textbf{0.391} & 0.412  & 0.399  & \textbf{0.373} & \textbf{0.386} & 0.404  & 0.403  & \textbf{0.366} & \textbf{0.371} \\
            \midrule
        \multirow{4}[2]{*}{\rotatebox{90}{ETTh1}} 
          & 96    & 0.382  & 0.398  & \textbf{0.363} & \textbf{0.372} & 0.389 & 0.408  & \textbf{0.368} & \textbf{0.388} & 0.383  & 0.391  & \textbf{0.371} & \textbf{0.386} & 0.378  & 0.389  & \textbf{0.365} & \textbf{0.371} \\
          & 192   & 0.439 & 0.424  &\textbf{0.409} & \textbf{0.417} & 0.443  & 0.441  & \textbf{0.410} & \textbf{0.423} & 0.437  & 0.429  & \textbf{0.415} & \textbf{0.421} & 0.434  & 0.422  & \textbf{0.422} & \textbf{0.435} \\
          & 336   & 0.480  & 0.448  & \textbf{0.425} & \textbf{0.430} & 0.489  & 0.461  & \textbf{0.428} & \textbf{0.437} & 0.475  & 0.449  & \textbf{0.423} & \textbf{0.432} & 0.473  & 0.445  & \textbf{0.439} & \textbf{0.48} \\
          & 720   & 0.484  & 0.475  & \textbf{0.461} & \textbf{0.457} & \textbf{0.506}  & \textbf{0.498}  & 0.511  & 0.503 & 0.502  & 0.512  & \textbf{0.476} & \textbf{0.495} & 0.476  & 0.470 & \textbf{0.462} & \textbf{0.461} \\
    \midrule

    \multirow{4}[2]{*}{\rotatebox{90}{ETTh2}}
          & 96    & 0.290  & 0.341  & \textbf{0.275} & \textbf{0.329} & \textbf{0.299}  & \textbf{0.351}  & 0.300 & 0.357 & 0.301  & 0.357  & \textbf{0.282} & \textbf{0.346} & 0.285  & 0.334  & \textbf{0.280}  & \textbf{0.328} \\
          & 192   & 0.375  & 0.392  & \textbf{0.351} & \textbf{0.362} & 0.383  & 0.402  & \textbf{0.377} & \textbf{0.398} & 0.381  & 0.396  & \textbf{0.372} & \textbf{0.391} & 0.371  & 0.389  & \textbf{0.365} & \textbf{0.372} \\
          & 336   & 0.414  & 0.426  & \textbf{0.402} & \textbf{0.422} & 0.431  & 0.435  & \textbf{0.423} & \textbf{0.431} & 0.433 & 0.441  & \textbf{0.425} & \textbf{0.432} & 0.420  & 0.428  & \textbf{0.415} & \textbf{0.417} \\
          & 720   & 0.422  & 0.447  & \textbf{0.411} & \textbf{0.442} & 0.429 & 0.448  & \textbf{0.423}  &  \textbf{0.445} & 0.437  & 0.458  & \textbf{0.419} & \textbf{0.445} & 0.425 & 0.441 & \textbf{0.419} & \textbf{0.432} \\
            \midrule
    \multirow{4}[2]{*}{\rotatebox{90}{Ex.}} 
          & 96    & 0.095  & 0.215  & \textbf{0.081} & \textbf{0.204}  & 0.089  & 0.218  & \textbf{0.082} & \textbf{0.206} & 0.087  & 0.212  & \textbf{0.082} & \textbf{0.203} & 0.085 & 0.206  & \textbf{0.080} & \textbf{0.201} \\
          & 192   & 0.182  & 0.308  & \textbf{0.175}  & \textbf{0.304}  & 0.177  & 0.301  & \textbf{0.174} & \textbf{0.308} & 0.176  & 0.311  & \textbf{0.174} & \textbf{0.307} & 0.178  & 0.297  & \textbf{0.173} & \textbf{0.302} \\
          & 336   & 0.349  & 0.432  & \textbf{0.306} & \textbf{0.395} & 0.336  & 0.421  & \textbf{0.306} & \textbf{0.397} & 0.310  & 0.427  & \textbf{0.305} & \textbf{0.399} & 0.307 & 0.399  & \textbf{0.304} & \textbf{0.394} \\
          & 720   & 0.890  & 0.719  & \textbf{0.830} & \textbf{0.683} & 0.851  & 0.693  & \textbf{0.837} & \textbf{0.690} & 0.847 & 0.706  & \textbf{0.844}  & \textbf{0.701}  & 0.897  & 0.702  & \textbf{0.844} & \textbf{0.685} \\
    \midrule
    \multirow{4}[2]{*}{\rotatebox{90}{Traffic}} 
          & 96    &0.632& 0.387  & \textbf{0.412}  & \textbf{0.288}  & 0.397  & 0.272  & \textbf{0.388} & \textbf{0.268} & 0.520  & 0.373  & \textbf{0.398} & \textbf{0.277} & 0.442  & 0.268  & \textbf{0.423} & \textbf{0.253} \\
          & 192   & 0.597 & 0.362  & \textbf{0.429}  & \textbf{0.291}  & 0.422  & 0.278  & \textbf{0.413} & \textbf{0.269} & 0.515  & 0.354  & \textbf{0.427} & \textbf{0.281} & 0.460  & 0.273  & \textbf{0.441} & \textbf{0.266} \\
          & 336   & 0.607  & 0.369  & \textbf{0.462}  & \textbf{0.336}  & 0.437  & 0.288  & \textbf{0.428} & \textbf{0.276} & 0.514  & 0.355  & \textbf{0.459} & \textbf{0.322} & 0.477  & 0.276  & \textbf{0.454} & \textbf{0.271} \\
          & 720  & 0.650  &0.391  & \textbf{0.489}  & \textbf{0.323}  & 0.473  & 0.304  & \textbf{0.462} & \textbf{0.296} & 0.563  & 0.377  & \textbf{0.478} & \textbf{0.310} & 0.517  & 0.299  & \textbf{0.471} & \textbf{0.286 } \\
    \midrule
    \multirow{4}[2]{*}{\rotatebox{90}{Weather}} 
          & 96    & 0.189  & 0.230  & \textbf{0.152} & \textbf{0.191} & 0.178  & 0.217  & \textbf{0.154} & \textbf{0.193} & 0.181  & 0.217  & \textbf{0.156} & \textbf{0.191} & 0.157  & 0.197  & \textbf{0.151} & \textbf{0.186} \\
          & 192   & 0.244  & 0.275  & \textbf{0.201} & \textbf{0.226} & 0.224  & 0.259  & \textbf{0.204} & \textbf{0.241} & 0.228  & 0.257  & \textbf{0.202} & \textbf{0.239} & 0.207  & 0.242  & \textbf{0.195} & \textbf{0.221} \\
          & 336   & 0.295  & 0.309  & \textbf{0.259} & \textbf{0.272} & 0.281 & 0.298  & \textbf{0.270} & \textbf{0.289} & 0.288  & 0.303  & \textbf{0.273} & \textbf{0.292} & 0.277  & 0.293 & \textbf{0.243} & \textbf{0.254} \\
          & 720   & 0.368  & 0.355  & \textbf{0.339} & \textbf{0.332} & 0.359  & 0.350  & \textbf{0.343} & \textbf{0.334} & 0.364  & 0.358  &\textbf{ 0.347} & \textbf{0.340}  & 0.351  & 0.342  & \textbf{0.332} & \textbf{0.321} \\

    \bottomrule
    \end{tabular}%
    }
\label{table1}

\end{table*}%

\begin{algorithm}[H]
\caption{CGFM Training}\label{alg:training}
\begin{algorithmic}[1]  
\State \textbf{Input:} History distribution \( p_H \), path parameters \( \alpha_t, \beta_t \), smoothing level \( \sigma \), network \( u^\theta_t \), source distribution \( p(x_0 \mid h) \), source mode: noise or auxiliary output, prediction objective \( g_t \), and target distribution \( q(x_1 \mid h) \).
\Repeat
    \State $h \sim p_H$
    \State $x_0 \sim p(x_0|h);\ x_1 \sim q(x_1|h)$
    \If {source mode == auxiliary output}  
        \State $\varepsilon \sim \mathcal{N}(0, I)$  
        \State $x_0 \gets x_0 + \sigma \varepsilon$  
    \EndIf
    \State $t \sim \mathcal{U}(0, 1)$
    \State $x_t \gets \alpha_t x_1 + \beta_t x_0$ 
    \State $\mathcal{L}_{CGM}(\theta) \gets \left\| g_t(x_0, x_1) - u^\theta_t(x_t|h) \right\|^2 $
    \State $\theta \gets \mathrm{Update}(\theta, \nabla_{\theta} \mathcal{L}_{CGM}(\theta))$
\Until {converged}  
\State \textbf{Return:} $u_t^\theta$ 
\end{algorithmic}
\end{algorithm}

\subsubsection{Auxiliary Model Enhanced Performance}

As shown in Table~\ref{table1}, for an input length of 96, the auxiliary model is evaluated across a wide range of mainstream forecasting models, including MLP-based, Transformer-based, and diffusion-based architectures. Our proposed CGFM framework achieves significant performance improvements across most benchmark datasets. 

Specifically, CGFM achieves the greatest improvement when using Rlinear as the auxiliary model, while the gains for iTransformer are comparatively smaller. Notably, iTransformer occasionally exhibits performance degradation; this phenomenon is examined in more detail in Figure~\ref{fig:pca-visualization}. As shown on the left of Figure~\ref{fig:pca-visualization}, the PCA trajectory of RLinear predictions closely aligns with the ground truth in both shape and continuity, indicating that RLinear effectively captures the temporal evolution of the underlying physical process. In contrast, the middle of Figure~\ref{fig:pca-visualization} shows that iTransformer predictions exhibit an irregular spatial distribution, with discontinuities in inter-point connections. This suggests non-smooth fluctuations in the high-dimensional representation of iTransformer’s predictions. These observations suggest that Transformer architectures are more prone to disrupting the smoothness of the data, thereby posing challenges for learning effective flow paths.

\begin{figure}[t]
\centering
\includegraphics[width=0.98\linewidth]{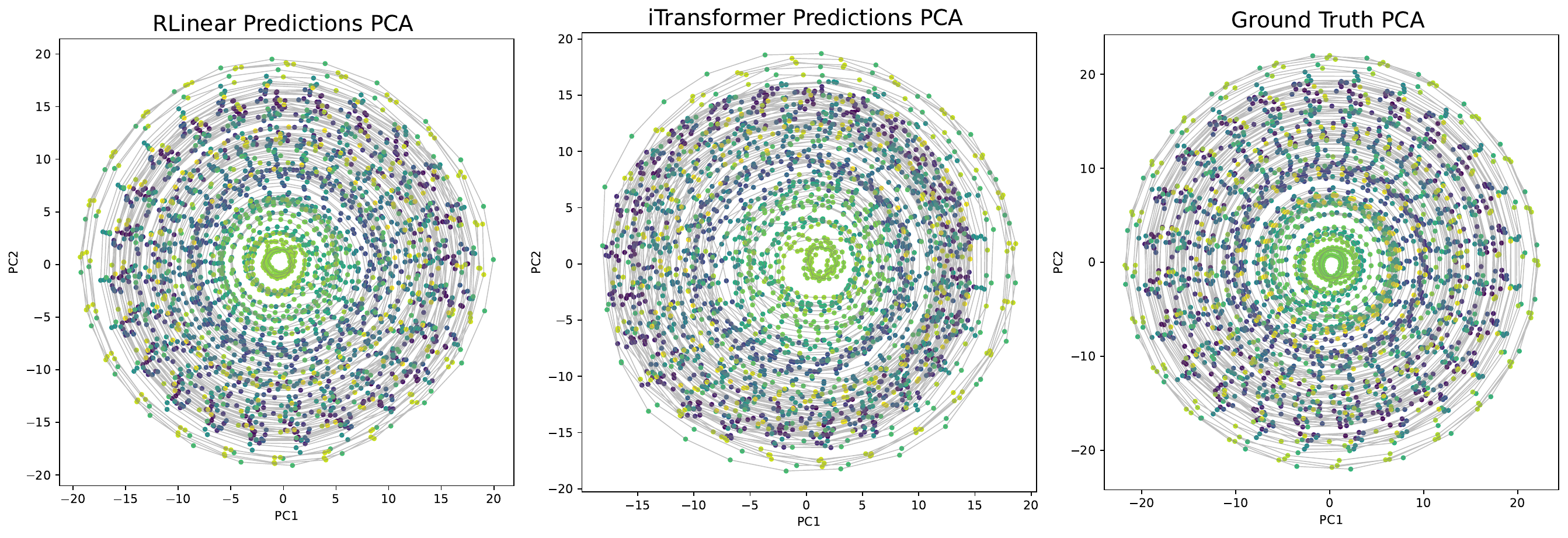}
\caption{PCA visualization of predictions and ground truth, showing the PCA projections of RLinear predictions, iTransformer predictions, and the ground truth, respectively.}
\vspace{-2ex}
\label{fig:pca-visualization}
\end{figure}

\subsubsection{Performance without Auxiliary Model}
\begin{table*}[h]

\small
\centering
\caption{Testing MSE in the multivariate setting. Number in brackets is the rank. CSDI runs out of memory on \textit{Traffic}.}
\def\arraystretch{0.5}
\resizebox{\textwidth}{!}{
\begin{tabular}{lccccccc|c}
\toprule
\textbf{Model} & \textbf{Weather}   & \textbf{Traffic}  & \textbf{ETTh1} & \textbf{ETTh2} & \textbf{ETTm1} & \textbf{ETTm2} & \textbf{Exchange} & \textbf{Avg Rank} \\
\midrule

\textbf{CGFM } & 0.161(3) & 0.430(2)     & 0.373(1) & 0.286(2)  & 0.317(2) & 0.173(2)  & 0.085(3) & \textbf{2.142 (1)}\\

\textbf{TimeDiff} & 0.181(8)  & 0.520(7)    & 0.383(7) & 0.301(7) &  0.339(8)  & 0.185(8) & 0.087(6) & 7.125(7)\\

\textbf{CSDI} & 0.301(13)   & -             & 0.503(13) & 0.356(11)  & 0.601(13) & 0.289(13)  & 0.258(13) & 12.667(13)\\

\textbf{iTransformer} & 0.178(6)  & 0.397(1) & 0.389(10) & 0.299(6)  & 0.336(6)  & 0.179(6) & 0.089(7)   & 6.000(6)\\

\textbf{Rlinear} & 0.189(9) & 0.632(11) & 0.382(5) & 0.290(4) & 0.359(9) & 0.182(7) & 0.095(8) & 7.429(8)\\

\textbf{FedFormer} & 0.219(11)    & 0.588(8)  &  0.376(3)  & 0.359(12)  & 0.379(11) & 0.203(10)  & 0.147(11)   & 7.571(9)\\

\textbf{TimeMixer}& 0.165(4) & 0.461(4) &   0.374(2)  & 0.294(5)  & 0.331(5) & 0.175(4)  &  0.083(1)  & 3.571(4)\\

\textbf{TimesNet} & 0.179(7)  & 0.593(9) & 0.384(9)  & 0.340(9) & 0.338(7)  & 0.188(9) & 0.107(10)  & 8.571(10)\\

\textbf{PatchTST} & 0.177(5) & 0.462(5)   & 0.383(7) & 0.304(8) & 0.320(4) & 0.175(4)  & 0.085(3)  & 5.142(5)\\
\textbf{Autoformer} & 0.266(12)  & 0.613(10)& 0.449(11) & 0.345(10)  & 0.505(12) & 0.255(12) & 0.189(12) & 11.286(12)\\
\textbf{TiDE} & 0.202(10)  & 0.803(12)  & 0.478(12) & 0.403(13) &  0.366(10) & 0.209(11)  & 0.093(8) & 10.857(11)\\
\textbf{Pathformer}& 0.156(1)   & 0.479(6) &  0.382(5) &  0.283(1)  & 0.319(3) & 0.174(3)  &   0.083(1)  &2.857(3)\\
\textbf{MultiPatchFormer}& 0.158(2)   & 0.438(2) &  0.378(4) &  0.285(3)  & 0.315(1) & 0.171(1)  &   0.085(3)  &2.285(2)\\
\bottomrule
\vspace{-0.3in}
\label{noisex_0}
\end{tabular}}
\end{table*}

To further validate the superiority of our proposed forecasting framework, we conducted an additional experiment where the initial condition $x_0$ was directly set as noise under the 96-to-96 prediction setting. CGFM achieved the best overall ranking, demonstrating the effectiveness and soundness of its model architecture. Although it does not attain the best performance on every individual dataset, as shown in Table~\ref{table1}, when combined with an auxiliary model, CGFM consistently achieves the best results across all datasets. This highlights the crucial role of the CGFM auxiliary model in enhancing forecasting performance.

\subsubsection{Case Study of residual learning}

As shown in Figure~\ref{learning residual}, we evaluate the 96-to-96 prediction task on the ETTh1 dataset using RLinear as the auxiliary model. The curves represent the mean sequences obtained by averaging all predicted and ground truth windows of length 96. It can be observed that RLinear exhibits persistent underestimation and large variance. In contrast, CGFM effectively learns the residual distribution, substantially reducing both variance and systematic bias, resulting in predictions that closely align with the ground truth.
\vspace{-0.5cm}
\begin{figure}[!ht]
    \centering
    \includegraphics[width=1\linewidth]{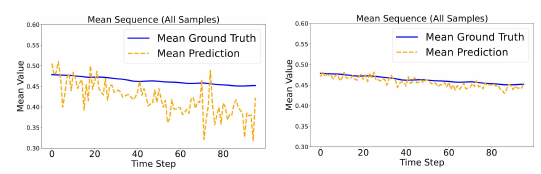}
   \caption{Comparison of results: Left (before CGFM) and Right (after CGFM).}

    \label{learning residual}
\end{figure}

\subsubsection{Analysis of Path Hyperparameter}
\label{path}

We explore different parameterization schemes for affine probabilistic paths, including Optimal Transport (CondOT), Polynomial (Poly-n), Linear Variance Preserving (LinearVP), and Cosine schedulers. Detailed formulas are provided in Appendix. 

Table~\ref{mse_mae_results} shows that the Poly-n scheduler, with velocity approaching zero near \(t \approx 0\), enables thorough exploration around \(X_0\), similar to the denoising phase in diffusion models. This extension effectively increases the model's decision time, enhancing its ability to capture intricate details early on. 


\subsubsection{Analysis of Prediction Functions}

To study different prediction functions' impact on time-series forecasting, we tested ETTh1 and ETTm1 datasets with RLinear as the auxiliary model. Though mathematically equivalent, results (Table~\ref{tab:prediction_func}) show \(X_1\)-Prediction consistently achieves the lowest MSE and MAE, outperforming \(X_0\)- and \(u_t\)-Prediction. Intuitively, \(X_1\)-Prediction directly targets the future series, \(X_0\)-Prediction focuses on noise, and \(u_t\)-Prediction mixes both. This aligns with prior work~\cite{watson2023novo,shen2023non}.


\begin{table}[t]
    \centering
    \footnotesize
    \caption{MSE and MAE results for different affine conditional paths on ETTh1 and ETTm1 datasets.}
    \renewcommand\arraystretch{0.95} 
    \begin{adjustbox}{max width=0.98\linewidth} 
        \begin{tabular}{c|cc|cc|cc|cc}
            \toprule
            Dataset & \multicolumn{2}{c|}{CondOT} & \multicolumn{2}{c|}{Poly-n} & \multicolumn{2}{c|}{VP} & \multicolumn{2}{c}{Cosine} \\
            & MSE & MAE & MSE & MAE & MSE & MAE & MSE & MAE \\
            \midrule
            ETTh1 (Rlinear) & 0.368 & 0.376 & \textbf{0.363} & \textbf{0.372} & 0.379 & 0.387 & 0.380 & 0.396 \\
            ETTm1 (Rlinear) & 0.314 & 0.363 & \textbf{0.307} & \textbf{0.351} & 0.336 & 0.376 & 0.332 & 0.371 \\
            ETTh1 (iTrans) & 0.374 & 0.391 & \textbf{0.368} & \textbf{0.388} & 0.387 & 0.403 & 0.387 & 0.386 \\
            ETTm1 (iTrans) & 0.326 & 0.367 & \textbf{0.313} & \textbf{0.362} & 0.331 & 0.372 & 0.334 & 0.376 \\
            \bottomrule
        \end{tabular}
    \end{adjustbox}
    \label{mse_mae_results}
\end{table}

\begin{table}[t]
    \centering
    \captionsetup{type=table}
    \footnotesize 
    \caption{MSE and MAE results for different prediction functions on ETTh1 and ETTm1 datasets.}
    \begin{adjustbox}{max width=1.0\textwidth}
        \begin{NiceTabular}{l cc cc cc} 
            \toprule
            Dataset & \multicolumn{2}{c}{$u_t$-Prediction} 
                    & \multicolumn{2}{c}{$X_0$-Prediction} 
                    & \multicolumn{2}{c}{$X_1$-Prediction} \\ 
            \cmidrule(lr){2-3} \cmidrule(lr){4-5} \cmidrule(lr){6-7}
                     & MSE & MAE & MSE & MAE & MSE & MAE \\ 
            \midrule
            ETTh1    & 0.370 & 0.379 & 0.384 & 0.385 & \textbf{0.363} & \textbf{0.372} \\ 
            ETTm1    & 0.328 & 0.361 & 0.343 & 0.367 & \textbf{0.307} & \textbf{0.351} \\ 
            \bottomrule
        \end{NiceTabular}
    \end{adjustbox}
    \label{tab:prediction_func}
\end{table}

\vspace{-2ex}
\section{Conclusion}
In this paper, we propose Conditional Guided Flow Matching (CGFM), a novel model-agnostic framework to enhance time series forecasting. By extending flow matching with auxiliary model outputs, CGFM learns from prediction residuals—structured patterns like systematic biases often overlooked when residuals are treated merely as optimization targets. CGFM incorporates historical data as both conditions and guidance, uses two-sided conditional probability paths, and employs general affine paths to expand the probability path space. These designs avoid path crossing without complex mechanisms, preserve temporal consistency, and strengthen distribution alignment. Extensive experiments show CGFM consistently improves performance and outperforms state-of-the-art models across diverse datasets, validating its effectiveness in advancing time series forecasting .

\bibliography{aaai2026}

\begin{thebibliography}{28}
\providecommand{\natexlab}[1]{#1}

\bibitem[{Albergo and Vanden-Eijnden(2023)}]{albergo2023buildingnormalizingflowsstochastic}
Albergo, M.~S.; and Vanden-Eijnden, E. 2023.
\newblock Building Normalizing Flows with Stochastic Interpolants.
\newblock arXiv:2209.15571.

\bibitem[{Chen et~al.(2021)Chen, Peng, Fu, and Ling}]{chen2021autoformer}
Chen, M.; Peng, H.; Fu, J.; and Ling, H. 2021.
\newblock Autoformer: Searching transformers for visual recognition.
\newblock In \emph{Proceedings of the IEEE/CVF international conference on computer vision}, 12270--12280.

\bibitem[{Chen et~al.(2024)Chen, ZHANG, Cheng, Shu, Wang, Wen, Yang, and Guo}]{chen2024pathformer}
Chen, P.; ZHANG, Y.; Cheng, Y.; Shu, Y.; Wang, Y.; Wen, Q.; Yang, B.; and Guo, C. 2024.
\newblock Pathformer: Multi-scale Transformers with Adaptive Pathways for Time Series Forecasting.
\newblock In \emph{The Twelfth International Conference on Learning Representations}.

\bibitem[{Das et~al.(2024)Das, Kong, Leach, Mathur, Sen, and Yu}]{das2024longtermforecastingtidetimeseries}
Das, A.; Kong, W.; Leach, A.; Mathur, S.; Sen, R.; and Yu, R. 2024.
\newblock Long-term Forecasting with TiDE: Time-series Dense Encoder.
\newblock arXiv:2304.08424.

\bibitem[{Esser et~al.(2024)Esser, Kulal, Blattmann, Entezari, Müller, Saini, Levi, Lorenz, Sauer, Boesel, Podell, Dockhorn, English, Lacey, Goodwin, Marek, and Rombach}]{esser2024scalingrectifiedflowtransformers}
Esser, P.; Kulal, S.; Blattmann, A.; Entezari, R.; Müller, J.; Saini, H.; Levi, Y.; Lorenz, D.; Sauer, A.; Boesel, F.; Podell, D.; Dockhorn, T.; English, Z.; Lacey, K.; Goodwin, A.; Marek, Y.; and Rombach, R. 2024.
\newblock Scaling Rectified Flow Transformers for High-Resolution Image Synthesis.
\newblock arXiv:2403.03206.

\bibitem[{Hu et~al.(2024)Hu, Wang, Wu, Zhang, Li, Wang, and Chen}]{hu2024fm}
Hu, Y.; Wang, X.; Wu, L.; Zhang, H.; Li, S.~Z.; Wang, S.; and Chen, T. 2024.
\newblock Fm-ts: Flow matching for time series generation.

\bibitem[{Kerrigan, Migliorini, and Smyth(2023)}]{kerrigan2023functionalflowmatching}
Kerrigan, G.; Migliorini, G.; and Smyth, P. 2023.
\newblock Functional Flow Matching.
\newblock arXiv:2305.17209.

\bibitem[{Kollovieh et~al.(2024{\natexlab{a}})Kollovieh, Ansari, Bohlke-Schneider, Zschiegner, Wang, and Wang}]{kollovieh2024predict}
Kollovieh, M.; Ansari, A.~F.; Bohlke-Schneider, M.; Zschiegner, J.; Wang, H.; and Wang, Y.~B. 2024{\natexlab{a}}.
\newblock Predict, refine, synthesize: Self-guiding diffusion models for probabilistic time series forecasting.
\newblock \emph{Advances in Neural Information Processing Systems}, 36.

\bibitem[{Kollovieh et~al.(2024{\natexlab{b}})Kollovieh, Lienen, Lüdke, Schwinn, and Günnemann}]{kollovieh2024flowmatchinggaussianprocess}
Kollovieh, M.; Lienen, M.; Lüdke, D.; Schwinn, L.; and Günnemann, S. 2024{\natexlab{b}}.
\newblock Flow Matching with Gaussian Process Priors for Probabilistic Time Series Forecasting.
\newblock arXiv:2410.03024.

\bibitem[{Li et~al.(2022)Li, Lu, Wang, and Dou}]{li2022generative}
Li, Y.; Lu, X.; Wang, Y.; and Dou, D. 2022.
\newblock Generative time series forecasting with diffusion, denoise, and disentanglement.
\newblock \emph{Advances in Neural Information Processing Systems}, 35: 23009--23022.

\bibitem[{Li et~al.(2023)Li, Qi, Li, and Xu}]{li2023revisitinglongtermtimeseries}
Li, Z.; Qi, S.; Li, Y.; and Xu, Z. 2023.
\newblock Revisiting Long-term Time Series Forecasting: An Investigation on Linear Mapping.
\newblock arXiv:2305.10721.

\bibitem[{Lim and Zohren(2021)}]{lim2021time}
Lim, B.; and Zohren, S. 2021.
\newblock Time-series forecasting with deep learning: a survey.
\newblock \emph{Philosophical Transactions of the Royal Society A}, 379(2194): 20200209.

\bibitem[{Lipman et~al.(2023)Lipman, Chen, Ben-Hamu, Nickel, and Le}]{lipman2023flowmatchinggenerativemodeling}
Lipman, Y.; Chen, R. T.~Q.; Ben-Hamu, H.; Nickel, M.; and Le, M. 2023.
\newblock Flow Matching for Generative Modeling.
\newblock arXiv:2210.02747.

\bibitem[{Liu, Gong, and Liu(2022)}]{liu2022flowstraightfastlearning}
Liu, X.; Gong, C.; and Liu, Q. 2022.
\newblock Flow Straight and Fast: Learning to Generate and Transfer Data with Rectified Flow.
\newblock arXiv:2209.03003.

\bibitem[{Liu et~al.(2024)Liu, Hu, Zhang, Wu, Wang, Ma, and Long}]{liu2024itransformer}
Liu, Y.; Hu, T.; Zhang, H.; Wu, H.; Wang, S.; Ma, L.; and Long, M. 2024.
\newblock iTransformer: Inverted Transformers Are Effective for Time Series Forecasting.
\newblock In \emph{The Twelfth International Conference on Learning Representations}.

\bibitem[{Naghashi, Boukadoum, and Diallo(2025)}]{naghashi2025multiscale}
Naghashi, V.; Boukadoum, M.; and Diallo, A.~B. 2025.
\newblock A multiscale model for multivariate time series forecasting.
\newblock \emph{Scientific Reports}, 15(1): 1565.

\bibitem[{Nie et~al.(2023)Nie, Nguyen, Sinthong, and Kalagnanam}]{nie2023timeseriesworth64}
Nie, Y.; Nguyen, N.~H.; Sinthong, P.; and Kalagnanam, J. 2023.
\newblock A Time Series is Worth 64 Words: Long-term Forecasting with Transformers.
\newblock arXiv:2211.14730.

\bibitem[{Shen, Chen, and Kwok(2024)}]{shen2024multi}
Shen, L.; Chen, W.; and Kwok, J. 2024.
\newblock Multi-Resolution Diffusion Models for Time Series Forecasting.
\newblock In \emph{The Twelfth International Conference on Learning Representations}.

\bibitem[{Shen and Kwok(2023)}]{shen2023non}
Shen, L.; and Kwok, J. 2023.
\newblock Non-autoregressive conditional diffusion models for time series prediction.
\newblock In \emph{International Conference on Machine Learning}, 31016--31029. PMLR.

\bibitem[{Tamir et~al.(2024)Tamir, Laabid, Heinonen, Garg, and Solin}]{tamir2024conditional}
Tamir, E.; Laabid, N.; Heinonen, M.; Garg, V.; and Solin, A. 2024.
\newblock Conditional Flow Matching for Time Series Modelling.
\newblock In \emph{ICML 2024 Workshop on Structured Probabilistic Inference {\&} Generative Modeling}.

\bibitem[{Tashiro et~al.(2021)Tashiro, Song, Song, and Ermon}]{tashiro2021csdi}
Tashiro, Y.; Song, J.; Song, Y.; and Ermon, S. 2021.
\newblock Csdi: Conditional score-based diffusion models for probabilistic time series imputation.
\newblock \emph{Advances in Neural Information Processing Systems}, 34: 24804--24816.

\bibitem[{Tong et~al.(2024)Tong, Fatras, Malkin, Huguet, Zhang, Rector-Brooks, Wolf, and Bengio}]{tong2024improvinggeneralizingflowbasedgenerative}
Tong, A.; Fatras, K.; Malkin, N.; Huguet, G.; Zhang, Y.; Rector-Brooks, J.; Wolf, G.; and Bengio, Y. 2024.
\newblock Improving and generalizing flow-based generative models with minibatch optimal transport.
\newblock arXiv:2302.00482.

\bibitem[{Wang et~al.(2024)Wang, Wu, Shi, Hu, Luo, Ma, Zhang, and Zhou}]{wang2024timemixer}
Wang, S.; Wu, H.; Shi, X.; Hu, T.; Luo, H.; Ma, L.; Zhang, J.~Y.; and Zhou, J. 2024.
\newblock Timemixer: Decomposable multiscale mixing for time series forecasting.
\newblock \emph{arXiv preprint arXiv:2405.14616}.

\bibitem[{Watson et~al.(2023)Watson, Juergens, Bennett, Trippe, Yim, Eisenach, Ahern, Borst, Ragotte, Milles et~al.}]{watson2023novo}
Watson, J.~L.; Juergens, D.; Bennett, N.~R.; Trippe, B.~L.; Yim, J.; Eisenach, H.~E.; Ahern, W.; Borst, A.~J.; Ragotte, R.~J.; Milles, L.~F.; et~al. 2023.
\newblock De novo design of protein structure and function with RFdiffusion.
\newblock \emph{Nature}, 620(7976): 1089--1100.

\bibitem[{Wu et~al.(2022)Wu, Hu, Liu, Zhou, Wang, and Long}]{wu2022timesnet}
Wu, H.; Hu, T.; Liu, Y.; Zhou, H.; Wang, J.; and Long, M. 2022.
\newblock Timesnet: Temporal 2d-variation modeling for general time series analysis.
\newblock \emph{arXiv preprint arXiv:2210.02186}.

\bibitem[{Yoon, Jarrett, and Van~der Schaar(2019)}]{yoon2019time}
Yoon, J.; Jarrett, D.; and Van~der Schaar, M. 2019.
\newblock Time-series generative adversarial networks.
\newblock \emph{Advances in neural information processing systems}, 32.

\bibitem[{Zhang et~al.(2024)Zhang, Pu, Kawamura, Loza, Bengio, Shung, and Tong}]{zhang2024trajectory}
Zhang, X.~N.; Pu, Y.; Kawamura, Y.; Loza, A.; Bengio, Y.; Shung, D.; and Tong, A. 2024.
\newblock Trajectory flow matching with applications to clinical time series modelling.
\newblock \emph{Advances in Neural Information Processing Systems}, 37: 107198--107224.

\bibitem[{Zhou et~al.(2022)Zhou, Ma, Wen, Wang, Sun, and Jin}]{zhou2022fedformer}
Zhou, T.; Ma, Z.; Wen, Q.; Wang, X.; Sun, L.; and Jin, R. 2022.
\newblock Fedformer: Frequency enhanced decomposed transformer for long-term series forecasting.
\newblock In \emph{International conference on machine learning}, 27268--27286. PMLR.

\end{thebibliography}


\end{document}